\newtheorem{theorem}{Theorem}
\newtheorem{definition}{Definition}
\newtheorem{assumption}{Assumption}
\begin{document}

\title{Causal Effect Estimation using identifiable Variational AutoEncoder with Latent Confounders and Post-Treatment Variables} 

\author{Anonymous}

 \author{Yang Xie, Ziqi Xu, Debo Cheng, Jiuyong Li, Lin Liu, Yinghao Zhang, and Zaiwen Feng
	 \thanks{This work was supported by the Chinese Central Universities (grant number: 2662023XXPY004) and the Australian Research Council (grant number: DP230101122). }
	 \thanks{Yang Xie, Yinghao Zhang, and Zaiwen Feng are with the College of Informatics, Huazhong Agricultural University, Wuhan, China (e-mail: Zaiwen.Feng@mail.hzau.edu.cn).}
	 \thanks{Ziqi Xu is with the Data 61, Commonwealth Scientific and Industrial Research Organisation (CSIRO), Australia (ziqi.xu@data61.csiro.au).}
	 \thanks{Debo Cheng, Jiuyong Li, and Lin Liu are with the STEM, University of South Australia, Adelaide, Australia (e-mail: debo.cheng@unisa.edu.au).}}
	
\markboth{Journal of IEEE Transactions on Artificial Intelligence, Vol. 00, No. 0, Month 2020}
{First A. Author \MakeLowercase{\textit{et al.}}: Bare Demo of IEEEtai.cls for IEEE Journals of IEEE Transactions on Artificial Intelligence}

\maketitle

\begin{abstract}
Estimating causal effects from observational data is challenging, especially in the presence of latent confounders. Much work has been done on addressing this challenge, but most of the existing research ignores the bias introduced by the post-treatment variables. In this paper, we propose a novel method of joint Variational AutoEncoder (VAE) and identifiable Variational AutoEncoder (iVAE) for learning the representations of latent confounders and latent post-treatment variables from their proxy variables, termed CPTiVAE, to achieve unbiased causal effect estimation from observational data. We further prove the identifiability in terms of the representation of latent post-treatment variables. Extensive experiments on synthetic and semi-synthetic datasets demonstrate that the CPTiVAE outperforms the state-of-the-art methods in the presence of latent confounders and post-treatment variables. We further apply CPTiVAE to a real-world dataset to show its potential application.
\end{abstract}

\begin{IEEEImpStatement}
\textcolor{black}{Deep learning models have shown great benefits in causal inference with latent variables, and increasingly, studies are utilizing the powerful representation and learning capabilities of neural networks for causal effect estimation from observational data. However, most existing studies inadvertently introduce post-treatment bias caused by post-treatment variables when making causal effect estimates from observational data. To address this, we use iVAE to learn the latent representations of post-treatment variables from their proxy variables for unbiased causal effect estimation. VAE has strong representation learning capabilities, and since post-treatment variables are affected by the treatment variable, it can effectively provide additional information during learning to ensure that the learned latent representation is identifiable. Finally, experiments on a large number of datasets show that the CPTiVAE model outperforms existing models.}
\end{IEEEImpStatement}

\begin{IEEEkeywords}
\textcolor{black}{Causal inference, Variational AutoEncoder, Identifiability.}
\end{IEEEkeywords}

\section{Introduction}

\IEEEPARstart{C}{ausal} inference promotes science by discovering, understanding, and explaining natural phenomena, and has become increasingly prominent in a variety of fields, such as economics~\cite{imbens_rubin_2015, Donald_economics}, social sciences~\cite{johansson2018learning}, medicine~\cite{Medicne_Connors}, and computer science~\cite{Guo_2020}. Randomized controlled trials (RCTs) are considered the gold standard for quantifying causal effects. However, conducting RCTs is often not feasible due to ethical issues, high costs, and time constraints~\cite{NBERw22595}. Therefore, causal effect estimation using observational data is a common alternative to RCTs~\cite{cheng2022datadriven, Guo_2020}.

In causal inference, confounders are specific variables that simultaneously affect the treatment variable $T$ and the outcome variable $Y$. For example, in medicine, a patient's age can be a confounder when assessing the treatment effect of a drug for a particular disease. This is because younger patients are generally more likely to recover than older patients. Additionally, age can influence the choice of treatment, with different doses often prescribed to younger versus older patients for the same drug~\cite{yao2021survey}. Consequently, neglecting to account for age may lead to the erroneous conclusion that the drug is highly effective in treating the disease \textcolor{black}{when} it is fact not.

Substantial progress has been made in developing methods to address confounding bias when estimating causal effects from observational data. Such methods include back-door adjustment~\cite{pearl_2009}, confounding balance~\cite{johansson2018learning}, tree-based estimators~\cite{Athey2019} and matching methods~\cite{imbens_rubin_2015}. These algorithms have achieved significant success in estimating causal effects from observational data. However, the estimation may still be biased by potential post-treatment variables affected by the treatment itself~\cite{pearl_2009}. Acharya et al.~\cite{acharya2016explaining} reported that up to 80\% of observational studies were conditional on post-treatment variables. 

\begin{figure}[t]%
    \centering
    \subfigure[CEVAE]{
    \includegraphics[width=0.14\textwidth]{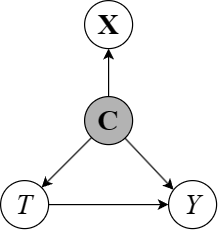}
    \label{CEVAE SCM}
    }
    \subfigure[TEDVAE]{
    \includegraphics[width=0.14\textwidth]{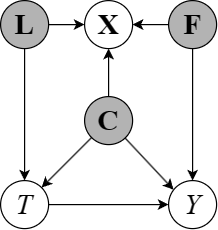}
    \label{TEDVAE SCM}
    }
    \subfigure[CPTiVAE]{
    \includegraphics[width=0.14\textwidth]{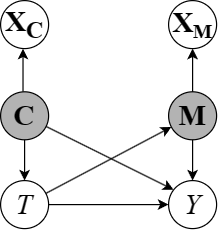}
    \label{CMiVAE SCM}
    }
    \caption{The causal graph assumed by CEVAE, TEDVAE, and CPTiVAE. In all figures, $T$ is the treatment, $Y$ is the outcome, and $\mathbf{C}$ are the latent confounders that affect both treatment and outcome. (a) The causal graph of CEVAE. $\mathbf{X}$ is the proxy for $\mathbf{C}$. (b) The causal graph of TEDVAE, $\mathbf{X}$ is the observed variables which may contain non-confounders and noisy proxy variables, $\mathbf{L}$ are factors that affect only the treatment, $\mathbf{F}$ are factors that affect only the outcome. (c) The causal graph for the proposed CPTiVAE. $\mathbf{M}$ are the latent post-treatment variables which are affected by the treatment variable $T$, $\mathbf{X_{C}}$ and $\mathbf{X_{M}}$ are proxies for $\mathbf{C}$ and $\mathbf{M}$, respectively.}
    \label{All SCM}
\end{figure}

Post-treatment variables bias the causal effect estimation from observational data. We use a simple example to illustrate this form of bias. An RCT is conducted to test whether a new drug can lower blood pressure in patients. In this scenario, the patients are randomly divided into two groups, one receiving the new drug (treatment group) and the other taking a placebo (control group). After receiving treatment, some patients become better and make lifestyle changes, such as diet and exercise. This lifestyle change may have affected their blood pressure, not just the effect of the drug itself. In this case, comparing the blood pressure levels of the two groups of patients before and after treatment may underestimate the actual effect of the new drug because post-treatment bias is not considered. Furthermore, these post-treatment variables are often unmeasured, which makes estimating causal effects more challenging. Accurate identification and adjustment of these variables are crucial for reliable causal inference.

To tackle the challenges mentioned above, we propose CPTiVAE, a novel identifiable Variational AutoEncoder (iVAE) method for unbiased causal effect estimation in the presence of latent confounders and post-treatment variables. In summary, our contributions are as follows:
\begin{itemize}
    \item We address an important and challenging problem in causal effect estimation using observational data with latent confounders and post-treatment variables.
    \item We propose a novel algorithm that jointly utilizes Variational AutoEncoder (VAE) and Identifiable VAE (iVAE) simultaneously to learn the representations of latent confounders and post-treatment variables from proxy variables, respectively. We further prove the identifiability in terms of the representation of latent post-treatment variables. To the best of our knowledge, this is the first work to tackle both confounding and post-treatment bias in causal effect estimation.
    \item We conduct extensive experiments on both synthetic and semi-synthetic datasets to evaluate the effectiveness of CPTiVAE. The results show that our algorithm outperforms existing methods. We also demonstrate its potential application on a real-world dataset.
\end{itemize}

\section{Related Work}
In recent years, numerous methods for estimating causal effects from observational data have been developed. A key challenge in performing causal inference from such data is tackling confounding bias, which becomes even more complex in the presence of latent post-treatment variables. In this section, we review some related works that address both confounding and post-treatment bias in causal inference.

Many methods assume that all confounders can be observed, i.e., the well-known unconfoundedness assumption~\cite{imbens_rubin_2015}. For example, Pocock et al.~\cite{Pocock2002SubgroupAC} proposed to control for all covariates by using an adjustment strategy, and the method in Li et al.~\cite{Li2018} balanced the covariates by re-weighting the propensity score. Some tree-based methods, such as CausalForestDML, design specific splitting criteria to estimate the causal effect~\cite{athey2016recursive, su2009subgroup}. Furthermore, many representation learning based methods have been developed for causal effect estimation from observational data, but they generally rely on the unconfoundedness assumption. Shalit et al.~\cite{shalit2017estimating} presented the method that shares outcome information between the treatment and control groups through representation learning while regularizing the distance of the representation distribution between the groups. Yoon et al.~\cite{yoon2018ganite} used a Generative Adversarial Nets (GAN) framework, GANITE  for estimating counterfactual outcomes based on a fitted generative adversarial network but only modelled the conditional expectation of the input outcomes. Kallus~\cite{kallus2020deepmatch} used weights defined in a representation space to balance treatment groups. However, these methods do not address the bias caused by post-treatment variables. 

Proxy variables are commonly used for causal inference~\cite{louizos2017causal,miao2018identifying, Wooldridge2009OnEF,xu2023disentangled,xu2023causal}. Miao et al.~\cite{miao2018identifying} proposed using proxies as general conditions for identifying causal effects, but this was not based on a data-driven method. Louizos et al.~\cite{louizos2017causal} introduced the CEVAE model (Figure~\ref{CEVAE SCM}), a neural network latent model that learns the representation of proxy variables for estimating individual and population causal effects. TEDVAE (Figure~\ref{TEDVAE SCM}) improves upon CEVAE by disentangling the representation learning for more accurate estimation~\cite{zhang2021treatment}. Additionally, some other Variational AutoEncoder (VAE) based estimators are related to our work. For example, $\beta$-VAE~\cite{higgins2016beta}, Factor VAE~\cite{kim2018disentangling}, HSIC-VAE~\cite{lopez2018information} and $\sigma$-VAE~\cite{rybkin2021simple} are both variants of VAE. However, the identifiability of the representations learned by these methods is not guaranteed.

Another line of work involves instrumental variables (IVs) methods, which are typically used to estimate causal effects when confounders are unmeasured in observational data~\cite{cheng2022datadriven}. Classical methods like the two-stage least squares model the relationship between treatments and outcomes using linear regression~\cite{kuang2020ivy}. Advances such as kernel IV~\cite{singh2019kernel} and Dual IV~\cite{muandet2020dual} employ more complex mappings for two-stage regression. A primary challenge for IV-based methods is identifying a valid IV. These IV-based methods are not directly related to our method CPTiVAE. 

In summary, traditional methods under the unconfoundedness assumption often fail to address latent confounders. Methods based on IVs can handle latent confounders, but obtaining valid instrumental variables is challenging. Methods based on representation learning and VAEs can effectively estimate causal effects from observational data but do not guarantee identifiability. Additionally, all these methods tend to overlook the presence of latent post-treatment variables, which may introduce bias in the estimation of causal effects from observational data.

\section{Preliminary}
Let $\mathcal{G}=(\mathbf{V},\mathbf{E})$ be a Directed Acyclic Graph (DAG), where $\mathbf{V}$ is the set of nodes and $\mathbf{E}$ is the set of edges between the nodes. In a causal DAG, a directed edge $V_{i} \to V_{j}$ signifies that variable of $V_{i}$ is a cause of variable of $V_{j}$ and $V_{j}$ is an effect variable of $V_{i}$. A path $\pi$ from $V_{i}$ to $V_{k}$ is a directed or causal path if all edges along it are directed towards $V_{k}$. If there is a directed path $\pi$ from $V_{i}$ to $V_{k}$, $V_{i}$ is known as an ancestor of $V_{k}$ and $V_{k}$ is a descendant of $V_{i}$. The sets of ancestors and descendants of a node $V$ are denoted as $An(V)$ and $De(V)$, respectively.

\begin{definition}[Markov property~\cite{pearl_2009}]
    Given a DAG $\mathcal{G}=(\mathbf{V}, \mathbf{E})$ and the joint probability distribution $P(\mathbf{V})$, $\mathcal{G}$ satisfies
the Markov property if for $\forall V_{i} \in \mathbf{V}$, $V_{i}$ is probabilistically independent of all of its non-descendants in $P(\mathbf{V})$, given the parent nodes of $V_{i}$.
\end{definition}

\begin{definition}[Faithfulness~\cite{spirtes2000causation}]
    Given a DAG $\mathcal{G}=(\mathbf{V}, \mathbf{E})$ and the joint probability distribution $P(\mathbf{V})$, $\mathcal{G}$ is faithful to a joint distribution $P(\mathbf{V})$ over $\mathbf{V}$ if and only if every independence present in $P(\mathbf{V})$ is entailed by $\mathcal{G}$ and satisfies the Markov property. A joint distribution $P(\mathbf{V})$ over $\mathbf{V}$ is faithful to $\mathcal{G}$ if and only if $\mathcal{G}$ is faithful to $P(\mathbf{V})$.
\end{definition}

When the Markov property and faithfulness are satisfied, we can use $d$-separation to infer the conditional independence between variables entailed in the DAG $\mathcal{G}$.

\begin{definition}[$d$-separation~\cite{pearl_2009}]
    A path $\pi$ in a DAG is said to be $d$-separated (or blocked) by a set of nodes $\mathbf{S}$ if and only if $(\romannumeral1)$ the path $\pi$ contains a chain $V_{i} \to V_{k} \to V_{j}$ or a fork \textcolor{black}{$V_{i} \gets V_{k} \to V_{j}$} such that the middle node $V_{k}$ is in $\mathbf{S}$, or $(\romannumeral2)$ the path $\pi$ contains an inverted fork (or collider) $V_{i} \to V_{k} \gets V_{j}$ such that $V_{k}$ is not in $\mathbf{S}$ and no descendant of $V_{k}$ is in $\mathbf{S}$.
\end{definition}

In a DAG $\mathcal{G}$, a set $\mathbf{S}$ is said to $d$-separate $V_{i}$ from $V_{j} (V_{i} \Vbar V_{j} | \mathbf{S})$ if and only if $\mathbf{S}$ blocks every path between $V_{i}$ to $V_{j}$. Otherwise, they are said to be $d$-connected by $\mathbf{S}$, denoted as $V_{i} \nVbar V_{j} | \mathbf{S}$.

\begin{definition}[Back-door criterion~\cite{pearl_2009}]
In a DAG $\mathcal{G}=(\mathbf{V}, \mathbf{E})$, for the pair of variables $(T,Y) \in \mathbf{V}$, a set of variables $\mathbf{Z} \subseteq \mathbf{V} \setminus \left \{T,Y \right \} $ satisfy the back-door criterion in the given DAG $\mathcal{G}$ if (1) $\mathbf{Z}$ does not contain a descendant node of $T$; and (2) $\mathbf{Z}$ blocks every path between $T$ and $Y$ that contains \textcolor{black}{an arrow} to $T$. If $\mathbf{Z}$ satisfies the back-door criterion relative to $(T,Y)$ in $\mathcal{G}$, we have $ATE(T,Y)=\mathbb{E}[Y|T=1,\mathbf{Z}=z]-\mathbb{E}[Y|T=0,\mathbf{Z}=z]$.
\end{definition}

\textcolor{black}{In this work, we define $\mathbf{X_{C}}$ to be the set of pre-treatment variables, which are not affected by the treatment variable. We define $\mathbf{X_{M}}$ to be the set of post-treatment variables, which are causally affected by the treatment variable. For example, when estimating the average causal effect between education and income, basic information about the individual such as age, gender, and nationality can be considered as pre-treatment variables. In this case, the individual's abilities will improve after receiving education, such as learning some job skills through education, which can be considered as the post-treatment variable. Unfortunately, most studies have neglected post-treatment variables, treating them as pre-treatment variables, which introduces post-treatment bias into causal effect estimation.}

\section{The Proposed CPTiVAE Method}
\subsection{Problem Setting}
Our proposed causal model is shown in Figure~\ref{CMiVAE SCM}. Let $\mathbf{C}$ be the latent confounders and $\mathbf{M}$ be the latent post-treatment variables, $\mathbf{X_{C}} \in \mathbb{R} ^{a}$ and $\mathbf{X_{M}} \in \mathbb{R} ^{b}$ are the observed proxy variables for them respectively. \textcolor{black}{In this paper, confounders are variables that simultaneously affect the
treatment $T$ and the outcome $Y$. Post-treatment are variables that are affected by the treatment $T$ and impact the outcome $Y$. Confounders and post-treatment variables are latent if they are not directly observed in the data. Referring to Figure~\ref{CMiVAE SCM}, a post-treatment variable is a cause of $Y$, similar to a confounder, but the difference between them is that a confounder is a cause of treatment (i.e., $\mathbf{C} \to T$) whereas a post-treatment variable is affected by the treatment (i.e., $T \to \mathbf{M}$).} For simplicity, we denote the whole \textcolor{black}{proxy} variables as the concatenation $\mathbf{X} = \left [ \mathbf{X_{C}}, \mathbf{X_{M}} \right ] \in \mathbb{R} ^{n}$, where $n=a+b$. 

Let $T$ denote a binary treatment, where $T=0$ indicates a unit receives no treatment (control) and $T=1$ indicates a unit receives the treatment (treated). Let $Y=T\times Y(1) + \left ( 1-T \right ) \times Y(0)$ denote the observed outcomes, where $Y(0)$ and $Y(1)$ denote the potential outcomes in the control group and treatment group, respectively. 

Note that, for an individual, we can only observe one of $Y(0)$ and $Y(1)$, and those unobserved potential outcomes are called counterfactual outcomes~\cite{imbens_rubin_2015,rosenbaum1983central,rubin1979using}. 

The individual treatment effect (ITE) for $i$ is defined as:
\begin{equation}
    \text{ITE}_{i}=Y_{i}(T=1)-Y_{i}(T=0)
    \label{ITE}
\end{equation}
where $Y_{i}(T=1)$ and $Y_{i}(T=0)$ are potential outcomes for individual $i$ in the treatment and control groups, respectively.

The average treatment effect (ATE) of $T$ on $Y$ at the population level is defined as:
\begin{equation}
    \text{ATE}(T,Y)=\mathbb{E}[Y_{i}(T=1)-Y_{i}(T=0)]
    \label{ATE}
\end{equation}
where $\mathbb{E}$ indicates the expectation function. 

We use the ``do'' operation introduced by~\cite{pearl_2009}, denoted as $do(\cdot)$. ATE can be expressed as follows:
\begin{equation}
    \text{ATE}(T,Y)=\mathbb{E}[Y|do(T=1)]-\mathbb{E}[Y|do(T=0)]
    \label{ATE_do}
\end{equation}

In general, ITE cannot be obtained directly from the observed data according to Eq.~\ref{ITE}, since only one potential outcome can be observed for an individual. There are many data-driven methods for estimating ATE from observational data, but they introduce confounding bias. Covariate adjustment~\cite{de2011covariate,perkovi2018complete} and confounding balance~\cite{shalit2017estimating} are common methods for estimating the causal effect of $T$ on $Y$ unbiasedly from observational data. To estimate the causal effect of $T$ on $Y$, the back-door criterion~\cite{pearl_2009} is commonly employed to identify an adjustment set $\mathbf{Z} \subseteq \mathbf{X}$ within a given DAG $\mathcal{G}$. Then, the set $\mathbf{Z}$ is used to adjust for confounding bias in causal effect estimations of $T$ on $Y$.

For some groups with the same features, some researchers generally use conditional average treatment effect (CATE) to approximate ITE~\cite{shalit2017estimating}, defined as:
\begin{equation}
    \begin{split}
        \text{CATE}(T,Y|\mathbf{X}=x)&=\mathbb{E}[Y|do(T=1),\mathbf{X}=x]\\&~~~~~~-\mathbb{E}[Y|do(T=0),\mathbf{X}=x]
    \end{split}
    \label{CATE}
\end{equation}

In this work, we consider a problem with the presence of latent confounders $\mathbf{C}$ and post-treatment variables $\mathbf{M}$. Randomization is done to establish similarity between the treatment and control groups to ensure that the treatment and control groups are similar in all but the treatment variables, which helps to eliminate latent confounders. The post-treatment variables may disrupt the balance between the treatment and control group, which eliminates the advantages of randomization~\cite{Montgomery2018HowCO}. Consequently, $ATE(T, Y)$ and $CATE(T, Y|\mathbf{X}=x)$, cannot be directly calculated using Eqs.~\ref{ATE} and~\ref{CATE}. In this study, we propose the CPTiVAE algorithm to tackle the confounding bias and post-treatment bias simultaneously. 

\subsection{The Proposed CPTiVAE Algorithm}
CPTiVAE aims to learn the latent confounder $\mathbf{C}$ from the proxy variables $\mathbf{X_{C}}$, and the latent post-treatment $\mathbf{M}$ from the proxy variables $\mathbf{X_{M}}$ (see Figure~\ref{CMiVAE SCM}). Subsequently, the representations $\left \{\mathbf{C},\mathbf{M}\right\}$ are used to address confounding and post-treatment biases. 

Our CPTiVAE algorithm consists of two main components, VAE and iVAE~\cite{Khemakhem2019VariationalAA}. Specifically, we use VAE to learn the representation $\mathbf{C}$ from the proxy variables $\mathbf{X_{C}}$ for addressing the confounding bias as done in the works~\cite{louizos2017causal,zhang2021treatment}. \textcolor{black}{When the representation of confounders and the representation of the potential post-treatment variables are correct, the causal effect estimation is unbiased.}

To guarantee the identifiability in terms of the learned representation $\mathbf{M}$ by  CPTiVAE, we consider the treatment $T$ as an additional observed variable and use iVAE to learn the representation $\mathbf{M}$ from the observed proxy variables $\mathbf{X_{M}}$. 

To guarantee the identifiability of $\mathbf{C}$, we take $T$ as additionally observed variables to approximate the prior $p(\mathbf{M}|T)$~\cite{Khemakhem2019VariationalAA}. Following the standard VAE~\cite{kingma2013auto}, we use Gaussian distribution to initialize the prior distribution of $p(\mathbf{C})$ and also assume the prior $p(\mathbf{M}|T)$ follows the Gaussian distribution.

In the inference model of CPTiVAE, the variational approximations of the posteriors are defined as:
\begin{equation}
    \begin{aligned}
        q_{\theta_{C}}(\mathbf{C}|\mathbf{X_{C}})&=\prod_{i=1}^{D_{\mathbf{C}}}\mathcal{N}(\mu=\hat{\mu}_{C_{i}},\sigma^2=\hat{\sigma}^2_{C_{i}})\\
        q_{\theta_{M}}(\mathbf{M}|T,\mathbf{X_{M}})&=\prod_{i=1}^{D_{\mathbf{M}}}\mathcal{N}(\mu=\hat{\mu}_{M_{i}},\sigma^2=\hat{\sigma}^2_{M_{i}})
    \end{aligned}
    \label{inference C and M}
\end{equation}
where $\hat{\mu}_{C_{i}}$, $\hat{\mu}_{M_{i}}$ and $\hat{\sigma}^2_{C_{i}}$, $\hat{\sigma}^2_{M_{i}}$ are the estimated means and variances of $C_{i}$ and $M_{i}$, respectively.

The genereative model for $\mathbf{X_{M}}$ and $\mathbf{X_{C}}$ are difined as:
\begin{equation}
    \begin{aligned}
        p_{\varphi_{X_{C}}}&(\mathbf{X_{C}}|\mathbf{C})=\\&~~~~~~\prod_{i=1}^{D_{\mathbf{X_{C}}}}P(X_{C_{i}}|\mathcal{N}(\mu=\hat{\mu}_{\mathbf{X_{C}}_{i}},\sigma^2=\hat{\sigma}^2_{\mathbf{X_{C}}_{i}}))\\
        p_{\varphi_{X_{M}}}&(\mathbf{X_{M}}|\mathbf{M})=\\&~~~~~~\prod_{i=1}^{D_{\mathbf{X_{M}}}}P(X_{M_{i}}|\mathcal{N}(\mu=\hat{\mu}_{\mathbf{X_{M}}_{i}},\sigma^2=\hat{\sigma}^2_{\mathbf{X_{M}}_{i}}))\\
    \end{aligned}
    \label{Generative function X}
\end{equation}
where $\hat{\mu}^2_{\mathbf{X_{C}}_{i}} = g_1(\mathbf{C}_i)$; $\hat{\sigma}^2_{\mathbf{X_{C}}_{i}} = g_1(\mathbf{C}_i)$; $\hat{\mu}^2_{\mathbf{X_{M}}_{i}} = g_2(\mathbf{M}_i)$; $\hat{\sigma}^2_{\mathbf{X_{M}}_{i}} = g_2(\mathbf{M}_i)$, $g_1(\cdot)$ and $g_2(\cdot)$ are the neural network parameterised by their own parameters. $D_{\mathbf{X_{C}}}$ and $D_{\mathbf{X_{M}}}$ indicate the dimensions of $\mathbf{X_{C}}$ and $\mathbf{X_{M}}$, respectively. Note that $P(X_{C_{i}}|\mathbf{C})$ and $P(X_{M_{i}}|\mathbf{M})$ are the distributions on the $i$-th variable.

The generative model for $T$ is defined as:
\begin{equation}
    \begin{aligned}
        &p_{\varphi_{T}}(T|\mathbf{C})=Bern(\sigma(h_{1}(\mathbf{C}))) \\
    \end{aligned}
    \label{Generative function T}
\end{equation}
where $Bern(\cdot)$ is the Bernoulli function, $h_{1}(\cdot)$ is a neural network, $\sigma(\cdot)$ is the logistic function.

Specifically, the generative models for $Y$ vary depending on the types of the attribute values. For continuous $Y$, we parameterize it as a Gaussian distribution with its mean and variance given by the neural network. We define the control groups and treatment groups as $P(Y|T=0,\mathbf{C},\mathbf{M})$ and $P(Y|T=1,\mathbf{C},\mathbf{M})$, respectively. Therefore, the generative model for $Y$ is defined as:
\begin{equation}
    \begin{aligned}
        &p_{\varphi_{Y}}(Y|T,\mathbf{C},\mathbf{M})=\mathcal{N}(\mu =\hat{\mu}_{Y}, \sigma^2=\hat{\sigma}_{Y}^2)\\
        &\hat{\mu}_{Y}=T \cdot h_{2}(\mathbf{C},\mathbf{M})+(1-T) \cdot h_{3}(\mathbf{C},\mathbf{M})\\
        &\hat{\sigma}_{Y}^2=T \cdot h_{4}(\mathbf{C},\mathbf{M})+(1-T) \cdot h_{5}(\mathbf{C},\mathbf{M})
    \end{aligned}
    \label{continuous Y}
\end{equation}
where $h_{2}(\cdot)$, $h_{3}(\cdot)$, $h_{4}(\cdot)$ and $h_{5}(\cdot)$ are the functions parameterized by neural networks, $\hat{\mu}_{Y}$ and $\hat{\sigma}_{Y}^2$ are the means and variances of the Gaussian distributions parametrized by neural networks.

For binary $Y$, it can be similarly parameterized with a Bernoulli distribution, defined as:
\begin{equation}
    p_{\varphi_{Y}}(Y|T,\mathbf{C},\mathbf{M})=Bern(\sigma(h_{6}(T, \mathbf{C},\mathbf{M})))
    \label{binary Y}
\end{equation}
where $h_{6}(\cdot)$ is a neural network.

We combine the inference model and the generative model into a single objective, and the variational evidence lower bound (ELBO) of this model is defined as:
\begin{equation}
    \begin{split}
        \mathcal{L}_{\text{ELBO}}&=\mathbb{E}_{q_{\theta_{C}}q_{\theta_{M}}}[\log p_{\varphi_{X}}(\mathbf{X}|\mathbf{C}, \mathbf{M})]\\&~~~~~~-D_{KL}[q_{\theta_{C}}(\mathbf{C}|\mathbf{X_{C}})||p(\mathbf{C})]\\&~~~~~~-D_{KL}[q_{\theta_{M}}(\mathbf{M}|T,\mathbf{X_{M}})||p(\mathbf{M}|T)]
    \end{split}
    \label{ELBO}
\end{equation}
where $D_{KL}[\cdot || \cdot]$ is a KL divergence term.

To ensure that $T$ is accurately predicted by $\mathbf{C}$, and $Y$ is correctly predicted by $T$, $\mathbf{C}$, and $\mathbf{M}$, we incorporate two auxiliary predictors into the variational ELBO. Consequently, the objective of CPTiVAE can be defined as:
\begin{equation}
    \begin{split}
        \mathcal{L}_{\text{CPTiVAE}}&=-\mathcal{L}_{\text{ELBO}}+\alpha \mathbb{E}_{q_{\theta_{C}}}[\log q(T|\mathbf{C})]\\&~~~~~~+\beta \mathbb{E}_{q_{\theta_{C}}q_{\theta_{M}}}[\log q(Y|T,\mathbf{C}, \mathbf{M})]
    \end{split}
    \label{loss fuction}
\end{equation}
where $\alpha$ and $\beta$ are the weights for balancing the two auxiliary predictors.

\subsection{Identifiability in terms of the Learned Representation by CPTiVAE}
In the preceding subsections, we discussed the generative process and optimization objective of the CPTiVAE method. Now, we provide the main theorem of this paper: the identifiability in terms of the learned representation $\mathbf{M}$ by CPTiVAE.

Let $\theta=(f,H,\lambda)$ be the parameters on $\Theta$ of the following conditional generative model:
\begin{equation}
    p_{\theta}(\mathbf{X_{M}}, \mathbf{M}|T)=p_{f}(\mathbf{X_{M}}|\mathbf{M})p_{H,\lambda}(\mathbf{M}|T)
    \label{eq12} 
\end{equation}
where $p_{f}(\mathbf{X_{M}}|\mathbf{M})$ is defined as:
\begin{equation}
    p_{f}(\mathbf{X_{M}}|\mathbf{M})=p_{\varepsilon}(\mathbf{X_{M}-f(\mathbf{M})})
    \label{p_f(x|z)}
\end{equation}
in which the value of $\mathbf{X_{M}}$ is decomposed as $\mathbf{X_{M}}=f(\mathbf{M})+\varepsilon$, where $\varepsilon$ is an independent noise variable with probability density function $p_{\varepsilon}(\varepsilon)$, i.e. $\varepsilon$ is independent of $\mathbf{M}$ or $f$.

We assume the conditional distribution $p_{H,\lambda}(\mathbf{M}|T)$ is conditional factorial with an exponential family distribution, which shows the relation between latent variables $\mathbf{M}$ and observed variables $T$~\cite{Khemakhem2019VariationalAA}.
\begin{assumption}
    The conditioning on $T$ is through an arbitrary function $\lambda(T)$ (such as a look-up table or neural network) that outputs the individual exponential family parameters $\lambda_{i,j}$. The probability density function is thus given by:  
    \begin{equation}
    \begin{split}
            &p_{H,\lambda}(\mathbf{M}|T)=\\&~~~~~~~~~\prod_{i}\frac{Q_{i}(M_{i})}{Z_{i}(T)} \exp[\sum_{j=1}^{k}H_{i,j}(M_{i})\lambda_{i,j}(T)]
        \label{p_exp}
    \end{split}
    \end{equation}
    where $Q_{i}$ is the base measure, $Z_{i}(T)$ is the normalizing constant, $\mathbf{H}_{i}=(H_{i,1},...,H_{i,k})$ are the sufficient statistics, $\mathbf{\lambda}_{i}(T)=(\lambda_{i,1}(T),...,\lambda_{i,k}(T))$ are the $T$ dependent parameters, and $k$, the dimension of each sufficient statistic, is fixed. 
\end{assumption}

To prove the identifiability of CPTiVAE, we introduce the following definitions~\cite{Khemakhem2019VariationalAA}:
\begin{definition}[Identifiability classes]
    Let $\sim$ be an equivalence relation on $\Theta$.  The Eq.~\ref{eq12} is identifiable up to $\sim$ (or $\sim$-identifiable) if
    \begin{equation}
        p_{\theta}(x)=p_{\tilde{\theta}}(x) \Longrightarrow \tilde{\theta} \sim \theta
        \label{Identifiability classes function}
    \end{equation}
    The elements of the quotient space $\Theta / \sim$ are called the identifiability classes.
    \label{Identifiability classes}
\end{definition}

According to Definition.~\ref{Identifiability classes}, we now define the equivalence relation on the set of parameters $\Theta$~\cite{Cai2022LongtermCE}.
\begin{definition}
     Let $\sim_{A}$ be the binary relation on $\Theta$ defined as follows:
     \begin{equation}
         \begin{split}
             &(f,H,\lambda) \sim_{A} (\hat{f},\hat{H},\hat{\lambda}) \Longleftrightarrow \\
             &\exists A,c | H(f^{-1}(x))=A\hat{H}(\hat{f}^{-1}(x))+c ,\forall x \in \mathcal{X}
         \end{split}
         \label{Ac}
     \end{equation}
     where $A$ is a invertible matrix and $c$ is a vector, and $\mathcal{X}$ is the domain of $x$.
\end{definition}

We can conclude the identifiability of CPTiVAE as follows:
\begin{theorem}
    Assume that the data we observed are sampled from a generative model defined according to Eqs.~\ref{eq12}-\ref{p_exp}, with parameters $(f, H,\lambda)$. Suppose the following conditions hold:
    \begin{itemize}
        \item[({\romannumeral1})]The mixture function $f$ in Eq.~\ref{p_f(x|z)} is injective.
        \item[({\romannumeral2})]The set $\left \{x \in \mathcal{X}|\phi_{\varepsilon}(x)=0 \right \}$ has measure zero, where $\phi_{\varepsilon}$ is the characteristic function of the density $p_{f}$ in Eq.~\ref{p_f(x|z)}.
        \item[({\romannumeral3})]The sufficient statistics $H_{i,j}$ are differentiable almost everywhere, and $(H_{i,j})_{1 \le j \le k}$ are linearly independent on any subset of $\mathcal{X}$ of measure greater than zero.
        \item[({\romannumeral4})]There exist $nk+1$ distinct points $T_{0},...,T_{nk}$ such that the matrix
        \begin{equation}
            L=(\lambda(T_{1})-\lambda(T_{0}),...,\lambda(T_{nk})-\lambda(T_{0}))
        \end{equation}
        of size $nk\times nk$ is invertible.
    \end{itemize}
    Then the parameters $(f,H,\lambda)$ are $\sim_{A}$-identifiable. 
    \label{identifiability theorem}
\end{theorem}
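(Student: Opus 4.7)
The plan is to adapt the iVAE identifiability argument of~\cite{Khemakhem2019VariationalAA} to our conditional generative model for $(\mathbf{X_M}, \mathbf{M} \mid T)$, showing that any two parameter tuples inducing the same observational distribution $p_\theta(\mathbf{X_M} \mid T)$ must be related by an invertible affine map of the sufficient statistics. The argument splits into three stages: deconvolve the noise via a characteristic-function argument, linearize the resulting identity using the exponential-family form, and then invert an $nk \times nk$ system built from condition (iv) to extract the relation $\sim_A$.

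First, I assume $p_\theta(\mathbf{X_M} \mid T) = p_{\tilde\theta}(\mathbf{X_M} \mid T)$ for all $(\mathbf{X_M}, T)$. Marginalizing $\mathbf{M}$ using Eqs.~\ref{eq12}--\ref{p_f(x|z)} and changing variables $u = f(\mathbf{M})$ (well defined by injectivity, condition (i)) rewrites both sides as convolutions of $p_\varepsilon$ with a pushforward density on $\mathcal{X}$. Taking Fourier transforms turns the convolutions into pointwise products involving $\phi_\varepsilon$, and condition (ii) on the zero set of $\phi_\varepsilon$ lets me cancel the common factor, yielding equality of the two pushforward densities almost everywhere on $\mathcal{X}$.

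Second, I take logarithms of that equality and substitute the exponential-family form of Eq.~\ref{p_exp}. After collecting all purely $x$-dependent contributions (Jacobians of $f^{-1}$ and $\tilde f^{-1}$, together with $\log Q$ and $\log \tilde Q$) into auxiliary functions $a(x)$ and $\tilde a(x)$, the identity reads $\langle \mathbf{H}(f^{-1}(x)), \lambda(T)\rangle - \log Z(T) + a(x) = \langle \tilde{\mathbf{H}}(\tilde f^{-1}(x)), \tilde\lambda(T)\rangle - \log \tilde Z(T) + \tilde a(x)$ for all $(x, T)$. Evaluating at the $nk+1$ distinguished points $T_0, \ldots, T_{nk}$ from condition (iv) and subtracting the $T_0$ equation from each of the others eliminates $a(x)$ and $\tilde a(x)$, leaving a linear system $L^\top \mathbf{H}(f^{-1}(x)) = \tilde L^\top \tilde{\mathbf{H}}(\tilde f^{-1}(x)) + d$, where the vector $d \in \mathbb{R}^{nk}$ depends only on the normalizing constants at the chosen $T_j$ and is manifestly independent of $x$. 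Inverting $L$ using condition (iv) gives $\mathbf{H}(f^{-1}(x)) = A\,\tilde{\mathbf{H}}(\tilde f^{-1}(x)) + c$ with $A = L^{-\top} \tilde L^\top$ and $c = L^{-\top} d$, which is exactly the relation $\sim_A$ of Eq.~\ref{Ac}.

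The step I expect to be the main obstacle is verifying that the matrix $A$ produced by the linear solve is genuinely invertible, since without this the reduction falls short of an equivalence. This is where condition (iii) does the decisive work: if $A$ had a nontrivial kernel, then some nonzero linear combination of the components of $\mathbf{H} \circ f^{-1}$ would be forced to be constant on $\mathcal{X}$, contradicting the assumed linear independence of the sufficient statistics on any subset of positive measure. The upstream deconvolution is routine once (i) and (ii) are in place, and the exponential-family reduction in stage two is essentially algebraic bookkeeping, so the delicate work lies in this final invertibility argument via (iii).
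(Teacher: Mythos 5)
Your proposal is correct and follows essentially the same route as the paper's proof (itself the iVAE argument of Khemakhem et al.): Fourier deconvolution using (i)--(ii), exponential-family linearization with subtraction at the $nk+1$ points from (iv), inversion of $L$, and finally invertibility of $A$ via condition (iii). The only refinement the paper adds to your last step is that one must differentiate the affine identity (collecting Jacobians at $k$ suitable points into an invertible matrix $Q$ with $Q = A\hat{Q}$) so that the additive constant $c$ is eliminated before linear independence of the sufficient statistics is invoked; a linear combination of $\mathbf{H}\circ f^{-1}$ equal to a nonzero constant does not by itself contradict (iii) as stated.
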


\begin{proof}
The proof of this Theorem is based on mild assumptions. Suppose we have two sets of parameters $\theta=(f,H,\lambda)$ and $\hat{\theta}=(\hat{f},\hat{H},\hat{\lambda})$ such that $p_{\theta}(\mathbf{M}|T)=p_{\hat{\theta}}(\mathbf{M}|T)$ for all pairs $(\mathbf{M},T)$. Then:    
\begin{align}
    &\int_{\mathcal{M}}p_{f}(\mathbf{X_{M}}|\mathbf{M})p_{H,\lambda}(\mathbf{M}|T)\mathrm{d}\mathbf{M} \notag \\
    =&\int_{\mathcal{M}}p_{\hat{f}}(\mathbf{X_{M}}|\mathbf{M})p_{\hat{H},\hat{\lambda}}(\mathbf{M}|T)\mathrm{d}\mathbf{M} \label{first_Eq} \\
    \Longrightarrow &\int_{\mathcal{X}}p_{H,\lambda}(f^{-1}(\overline{\mathbf{X}}_{\mathbf{M}})|T)volJ_{f^{-1}}(\overline{\mathbf{X}}_{\mathbf{M}}) \notag \\
    &\qquad \qquad \qquad \qquad p_{\varepsilon}(\overline{\mathbf{X}}_{\mathbf{M}}|f^{-1}(\overline{\mathbf{X}}_{\mathbf{M}}))\mathrm{d}\overline{\mathbf{X}}_{\mathbf{M}} \notag \\
    =&\int_{\mathcal{X}}p_{\hat{H},\hat{\lambda}}(\hat{f}^{-1}(\overline{\mathbf{X}}_{\mathbf{M}})|T)volJ_{\hat{f}^{-1}}(\overline{\mathbf{X}}_{\mathbf{M}}) \notag \\
    &\qquad \qquad \qquad \qquad p_{\hat{\varepsilon}}(\overline{\mathbf{X}}_{\mathbf{M}}|\hat{f}^{-1}(\overline{\mathbf{X}}_{\mathbf{M}}))\mathrm{d}\overline{\mathbf{X}}_{\mathbf{M}} \label{dX_M} \\
    \Longrightarrow &\int_{\mathbb{R}^{d}}\hat{p}_{H,\lambda,f,T}(\overline{\mathbf{X}}_{\mathbf{M}})p_{\varepsilon}(\overline{\mathbf{X}}_{\mathbf{M}}|f^{-1}(\overline{\mathbf{X}}_{\mathbf{M}}))\mathrm{d}\overline{\mathbf{X}}_{\mathbf{M}} \notag \\
    =&\int_{\mathbb{R}^{d}}\hat{p}_{\hat{H},\hat{\lambda},\hat{f},T}(\overline{\mathbf{X}}_{\mathbf{M}})p_{\varepsilon}(\overline{\mathbf{X}}_{\mathbf{M}}|f^{-1}(\overline{\mathbf{X}}_{\mathbf{M}}))\mathrm{d}\overline{\mathbf{X}}_{\mathbf{M}}  \label{Rd} \\
    \Longrightarrow &(\hat{p}_{H,\lambda,f,T}*p_{\varepsilon})(\overline{\mathbf{X}}_{\mathbf{M}})=(\hat{p}_{\hat{H},\hat{\lambda},\hat{f},T}*p_{\varepsilon})(\overline{\mathbf{X}}_{\mathbf{M}}) \label{*} \\
    \Longrightarrow &F[\hat{p}_{H,\lambda,f,T}](\omega)\varphi_{\varepsilon}(\omega)=F[\hat{p}_{\hat{H},\hat{\lambda},\hat{f},T}](\omega)\varphi_{\varepsilon}(\omega)  \label{Fphi} \\
    \Longrightarrow &F[\hat{p}_{H,\lambda,f,T}](\omega)=F[\hat{p}_{\hat{H},\hat{\lambda},\hat{f},T}](\omega) \label{Fourier}\\
    \Longrightarrow &\hat{p}_{H,\lambda,f,T}(\mathbf{X_{M}})=\hat{p}_{\hat{H},\hat{\lambda},\hat{f},T}(\mathbf{X_{M}}) \label{last_Eq}
\end{align}

In Eq.~\ref{dX_M}, $vol A$ is the volume of a matrix $A$. When $A$ is full column rank, $volA=\sqrt{A^{T}A}$, and when $A$ is invertible, $volA=|detA|$. $J$ denotes the Jacobian. We made the change of variable $\overline{\mathbf{X}}_{M}=f(\mathbf{M})$ on the left hand side, and $\overline{\mathbf{X}}_{M}=\hat{f}(\mathbf{M})$ on the right hand side.

We introduce the following equation:
\begin{align}
    &p_{H,\lambda}(f^{-1}(\overline{\mathbf{X}}_{\mathbf{M}})|T)volJ_{f^{-1}}(\overline{\mathbf{X}}_{\mathbf{M}}) \notag \\
   &=~~~p_{\hat{H},\hat{\lambda}}(\hat{f}^{-1}(\overline{\mathbf{X}}_{\mathbf{M}})|T)volJ_{\hat{f}^{-1}}(\overline{\mathbf{X}}_{\mathbf{M}})
   \label{p_vol}
\end{align}

By taking the logarithm on both side of Eq.~\ref{p_vol} and replacing $p_{H,\lambda}$ by its expression from Eq. 13, we have:
\begin{align}
    \log volJ_{f^{-1}}(\overline{\mathbf{X}}_{\mathbf{M}})+&\sum_{i=1}^{n}(\log Q_{i}(f_{i}^{-1}(\overline{\mathbf{X}}_{\mathbf{M}})) \notag \\
    -\log Z_{i}(T)+&\sum_{j=1}^{k}H_{i,j}(f_{i}^{-1}(\overline{\mathbf{X}}_{\mathbf{M}}))\lambda_{i,j}(T)) \notag \\
    = \log volJ_{\hat{f}^{-1}}(\overline{\mathbf{X}}_{\mathbf{M}})+&\sum_{i=1}^{n}(\log \hat{Q}_{i}(\hat{f}_{i}^{-1}(\overline{\mathbf{X}}_{\mathbf{M}})) \notag \\
    -\log \hat{Z}_{i}(T)+&\sum_{j=1}^{k}\hat{H}_{i,j}(\hat{f}_{i}^{-1}(\overline{\mathbf{X}}_{\mathbf{M}}))\hat{\lambda}_{i,j}(T))
    \label{log_vol}
\end{align}
Let $nk+1$ distinct points $T_{0},...,T_{nk}$ be the point provided by the assumption $(\romannumeral4)$ of the Theorem, and define $\overline{\lambda}(T)=\lambda(T)-\lambda(T_{0})$. We plug each of those $T_{l}$ in Eq.~\ref{log_vol} to obtain $nk+1$ equations, and we subtract the first equation from the remaining $nk$ equations to obtain for $l=1,...,nk$:
\begin{align}
    &\left \langle \mathbf{H}(f^{-1}(\overline{\mathbf{X}}_{\mathbf{M}})), \overline{\lambda}(T_{l}) \right \rangle + \sum_{i}\log \frac{Z_{i}(T_{0})}{Z_{i}(T_{l})}  \notag \\
    &~~~=\left \langle \hat{\mathbf{H}}(\hat{f}^{-1}(\overline{\mathbf{X}}_{\mathbf{M}})), \overline{\hat{\lambda}}(T_{l}) \right \rangle + \sum_{i}\log \frac{\hat{Z}_{i}(T_{0})}{\hat{Z}_{i}(T_{l})}
    \label{<T>}
\end{align}
Let $L$ be the matrix defined in assumtion $(\romannumeral4)$ and $\hat{L}$ similarly defined for $\hat{\lambda}$ ($\hat{L}$ is not necesssarily invertible). Define $b_{l}=\sum_{i}\log \frac{\hat{Z}_{i}(T_{0})Z_{i}(T_{l})}{Z_{i}(T_{0})\hat{Z}_{i}(T_{l})}$ and $b$ the vector of all $b_{l}$ for $l=1,...,nk$. Then Eq.~\ref{<T>} can be rewritten in the matrix form:
\begin{equation}
    L^{T}\mathbf{H}(f^{-1}(\overline{\mathbf{X}}_{\mathbf{M}}))=\hat{L}^{T}\hat{\mathbf{H}}(\hat{f}^{-1}(\overline{\mathbf{X}}_{\mathbf{M}}))+b
    \label{LH}
\end{equation}

Then we multiply both sides of the above equations by $L^{-T}$ to find:
\begin{equation}
    \mathbf{H}(f^{-1}(\overline{\mathbf{X}}_{\mathbf{M}}))=A\hat{\mathbf{H}}(\hat{f}^{-1}(\overline{\mathbf{X}}_{\mathbf{M}}))+c
    \label{H}
\end{equation}
where $A=L^{-T}\hat{L}$ and $c=L^{-T}b$.

Now we show that $A$ is invertible. By definition of $H$ and according to the assumption $(\romannumeral3)$ of the Theorem, its Jacobian exists and is an $nk \times n$ matrix of rank $n$. This implies that the Jacobian of $\hat{H} \circ \hat{f}^{-1}$ exists and is of rank $n$ and so is $A$. There are two cases:
(1) If $k=1$, $A$ is $n \times n$ matrix of rank $n$ and invertible; (2) If $k>1$, we define $\overline{\mathbf{X}}_{\mathbf{M}}=f^{-1}(\overline{\mathbf{X}}_{\mathbf{M}})$ and $\mathbf{H}_{i}(\overline{\mathbf{X}}_{\mathbf{M}i})=(H_{i,1}(\overline{\mathbf{X}}_{\mathbf{M}i}), ..., H_{i,k}(\overline{\mathbf{X}}_{\mathbf{M}i}))$. For each $i \in [1, ..., n]$ there exsit $k$ points $\overline{\mathbf{X}}_{\mathbf{M}i}^{1}, ..., \overline{\mathbf{X}}_{\mathbf{M}i}^{k}$ such that $(\mathbf{H}_{i}^{\prime}(\overline{\mathbf{X}}_{\mathbf{M}i}^{1}), ..., \mathbf{H}_{i}^{\prime}(\overline{\mathbf{X}}_{\mathbf{M}i}^{k}))$ are linearly independent. We suppose that for any choice of such $k$ points, the family $(\mathbf{H}_{i}^{\prime}(\overline{\mathbf{X}}_{\mathbf{M}i}^{1}), ..., \mathbf{H}_{i}^{\prime}(\overline{\mathbf{X}}_{\mathbf{M}i}^{k}))$ is never linearly independent. That means that $\mathbf{H}_{i}^{\prime}(\mathbb{R})$ is included in a subspace of $\mathbb{R}^{k}$ of the dimension at most $k-1$. Let $g$ be a non-zero vector that is orthigonal to $\mathbf{H}_{i}^{\prime}(\mathbb{R})$. Then for all $X_{M} \in \mathbb{R}$, we have $<\mathbf{H}_{i}^{\prime}(\mathbb{R}), g>=0$. By integrating we find that $<\mathbf{H}_{i}^{\prime}(\mathbb{R}), g>=const$. Since this is true for all $X_{M} \in \mathbb{R}$ and $g \neq 0$, we conclude that the distribution is not strongly exponential, which contradicts our hypothesis.

Then, we prove $A$ is invertible. Collect points into $k$ vectors $(\overline{X}_{M}^{1}, ..., \overline{X}_{M}^{k})$, and concatenate the $k$ Jacobian $J_{\mathbf{H}}(\overline{X}_{M}^{l})$ evaluated at each of those vectors horizontally into matrix $Q=(J_{\mathbf{H}}(\overline{X}_{M}^{1}), ..., J_{\mathbf{H}}(\overline{X}_{M}^{k}))$ and similarly define $\hat{Q}$ as the concatenation of the Jacobian of $\hat{\mathbf{H}}(\hat{f}^{-1} \circ f(\overline{X}_{M}))$ evaluated at those points. Then the matrix $Q$ is invertible. By differentiating Eq.~\ref{H} for each $X_{M}^{l}$, we have:
\begin{equation}
    Q=A\hat{Q}
    \label{QAQ}
\end{equation}
The invertibility of $Q$ implies the invertibility of $A$ and $\hat{Q}$, which completes the proof.
\end{proof}

\textcolor{black}{Theorem~\ref{identifiability theorem} is a basic form of identifiability of generative model Eq.~\ref{eq12}. The latent variables $\hat{\mathbf{M}}$ can be recovered by a permutation (the matrix $A$) and a point-wise nonlinearity (in the form of $H$ and $\hat{H}$) of the original latent variables $\mathbf{M}$.} 

\section{Experiments}
In this section, we first conduct experiments on synthetic and semi-synthetic datasets to validate that the proposed CPTiVAE can efficiently estimate causal effects with the latent confounders and post-treatment variables. We use the causal DAG in Figure~\ref{CMiVAE SCM} to generate synthetic datasets for evaluating the performance of CPTiVAE. We also perform a sensitivity analysis on the model parameters and verify the feasibility of the algorithm when the learned representation dimensions are different from the true covariate dimensions. Finally, we apply our CPTiVAE algorithm on a real-world dataset to demonstrate its potential application. The code is available in the appendix.

\begin{table*}[t]
\setlength\tabcolsep{3pt}
\centering
\begin{tabular}{ccccccccccccc}
\toprule
\multirow{2}{*}{Method} &  & \multicolumn{5}{c}{ATE}      &  & \multicolumn{5}{c}{CATE}     \\ \cmidrule{3-7} \cmidrule{9-13} 
                        &  & 2K & 4K & 6K & 8K & 10K &  & 2K & 4K & 6K & 8K & 10K \\ \midrule
LDML                    &  & 3.01 ± 0.01 & 3.00 ± 0.01  & 3.00 ± 0.01 & 2.99 ± 0.01 & 3.00 ± 0.01 &  & 3.00 ± 0.01 & 3.01 ± 0.01  & 3.00 ± 0.01 & 3.00 ± 0.01 & 3.00 ± 0.01 \\
LDRL                    &  & 3.00 ± 0.01 & 3.01 ± 0.01  & 2.99 ± 0.01 & 3.01 ± 0.01 & 3.00 ± 0.01 &  & 2.99 ± 0.01 & 3.00 ± 0.01  & 3.00 ± 0.01 & 2.99 ± 0.00 & 3.00 ± 0.01 \\
KDML                    &  & 2.97 ± 0.01 & 2.98 ± 0.01  & 2.96 ± 0.01 & 2.97 ± 0.01 & 2.98 ± 0.01 &  & 2.98 ± 0.01 & 2.98 ± 0.01  & 2.97 ± 0.01 & 2.98 ± 0.01 & 2.99 ± 0.01 \\
CFDML                   &  & 3.00 ± 0.01 & 3.02 ± 0.01  & 3.01 ± 0.01 & 3.02 ± 0.01 & 3.01 ± 0.01 &  & 3.01 ± 0.01 & 3.01 ± 0.01  & 3.02 ± 0.01 & 3.00 ± 0.01 & 2.99 ± 0.01 \\
SLDML                   &  & 2.98 ± 0.01 & 2.99 ± 0.01  & 2.98 ± 0.01 & 2.97 ± 0.01 & 2.99 ± 0.01 &  & 3.00 ± 0.01 & 2.99 ± 0.01  & 3.00 ± 0.01 & 2.99 ± 0.01 & 2.98 ± 0.01 \\
GANITE                  &  & 2.45 ± 0.13 & 2.44 ± 0.01 & 2.46 ± 0.01 & 2.46 ± 0.01 & 2.45 ± 0.01 &  & 3.66 ± 0.01 & 3.67 ± 0.01 & 3.67 ± 0.01 & 3.68 ± 0.01 & 3.67 ± 0.01 \\
X-learner               &  & 3.01 ± 0.01 & 3.00 ± 0.01 & 2.99 ± 0.01 & 3.00 ± 0.01 & 3.00 ± 0.01 &  & 3.00 ± 0.01 & 3.01 ± 0.01 & 2.99 ± 0.01 & 3.01 ± 0.01 & 3.00 ± 0.01 \\
R-learner               &  & 3.02 ± 0.01 & 3.00 ± 0.01 & 3.01 ± 0.01 & 2.99 ± 0.01 & 3.00 ± 0.01 &  & 3.07 ± 0.01 & 3.02 ± 0.01 & 3.01 ± 0.01 & 3.01 ± 0.01 & 3.01 ± 0.01 \\
CEVAE                   &  & 2.93 ± 0.26 & 2.98 ± 0.37 & 1.15 ± 0.79 & 1.89 ± 0.47 & 1.59 ± 0.64 &  & 2.60 ± 0.65 & 2.95 ± 0.91 & 3.20 ± 0.95 & 3.54 ± 0.30 & 3.33 ± 0.66 \\
TEDVAE                  &  & 3.69 ± 0.02 & 3.59 ± 0.29 & 2.92 ± 0.84 & 2.87 ± 1.31 & 2.61 ± 0.76 &  & 3.69 ± 0.02 & 3.62 ± 0.24 & 3.21 ± 0.36 & 3.11 ± 0.67 & 2.78 ± 0.47 \\
CPTiVAE                 &  & \multicolumn{1}{c}{\textbf{0.90 ± 0.05}}  & \multicolumn{1}{c}{\textbf{0.84 ± 0.08}} & \multicolumn{1}{c}{\textbf{0.83 ± 0.03}} & \multicolumn{1}{c}{\textbf{0.81 ± 0.03}} & \multicolumn{1}{c}{\textbf{0.70 ± 0.09}} &  & \multicolumn{1}{c}{\textbf{0.92 ± 0.04}}  & \multicolumn{1}{c}{\textbf{0.87 ± 0.05}} & \multicolumn{1}{c}{\textbf{0.84 ± 0.03}} & \multicolumn{1}{c}{\textbf{0.87 ± 0.06}} & \multicolumn{1}{c}{\textbf{0.82 ± 0.04}} \\ \bottomrule
\end{tabular}
\caption{The performance for estimating ATE and CATE on synthetic datasets with different sample sizes. The results are reported by AE and PEHE (mean ± standard deviation), respectively. The best result is marked in boldface.}
\label{abs error}
\end{table*}

\subsection{Experiment Setup}
\paragraph{Baseline methods} 
We compare CPTiVAE with ten state-of-the-art causal effect estimators that are widely used to estimate ATE and CATE from observational data. LinearDML (LDML)~\cite{chernozhukov2018double}, LinearDRLearner (LDRL)~\cite{foster2023orthogonal}, KernelDML (KDML)~\cite{nie2021quasi}, CausalForestDML (CFDML)~\cite{Athey2019}, SparseLinearDML (SLDML)~\cite{semenova2017estimation}, GANITE~\cite{yoon2018ganite}, and Mete-learners (including X-learner and R-learner)~\cite{kunzel2019metalearners} are the machine learning based estimators. There are two VAE-based estimators, CEVAE~\cite{louizos2017causal} and TEDVAE~\cite{zhang2021treatment}.

\paragraph{Evalution metrics}
We report the Absolute Error (AE) to evaluate ATE estimation, where $\text{AE}=|\hat{\text{ATE}}-\text{ATE}|$, ATE is the true causal effect and $\hat{\text{ATE}}$ is the estimated causal effect.

We use the Precision of the Estimation of Heterogeneous Effect (PEHE) to evaluate CATE estimation, defined as,

\begin{equation}
    {\varepsilon_{\text{PEHE}}}=\sqrt{\frac{1}{N} \sum_{i=1}^{N} (\hat{\text{ITE}}-\text{ITE})^2},
\end{equation}
where $\hat{\text{ITE}}$ is the estimated individual treatment effect and $\text{ITE}$ is the true individual treatment effect.

\paragraph{Implementation details}
CPTiVAE is mainly implemented by \textit{Python} and includes the libraries \textit{pytorch}~\cite{paszke2019pytorch} and \textit{pyro}~\cite{bingham2019pyro}.

The implementations of LDML, LDRL, KDML, CFDML, and SLDML are from the \textit{Python} package \textit{econml}. The implementation of GANITE is from the authors' GitHub\footnote{\url{https://github.com/vanderschaarlab/mlforhealthlabpub/tree/main/alg/ganite}}. The implementations of X-learner and R-learner are from the \textit{Python} package \textit{CausalML}~\cite{Zhao2020causalml}. We use \textit{Python} library \textit{pyro} to achieve CEVAE and the implementation of TEDVAE is from the authors' GitHub\footnote{\url{https://github.com/WeijiaZhang/TEDVAE}}.

\begin{table}[t]
\centering
\begin{tabular}{cccc}
\toprule[1pt]
\multirow{2}{*}{Weight} & \multicolumn{3}{c}{Sample Sizes} \\ \cmidrule{2-4}                                                                          
& \multicolumn{1}{c}{6K} & \multicolumn{1}{c}{8K} & \multicolumn{1}{c}{10K} \\
\midrule
$\left \{\alpha, \beta \right \}=0.01$ & 0.89 ± 0.10 & 0.82 ± 0.40 & 0.92 ± 0.06 \\
$\left \{\alpha, \beta \right \}=0.1$ & 0.99 ± 0.08 & 0.91 ± 0.07 & 0.99 ± 0.22 \\
$\left \{\alpha, \beta \right \}=1$ & 0.83 ± 0.03 & 0.81 ± 0.13 & 0.70 ± 0.09 \\
$\left \{\alpha, \beta \right \}=10$ & 1.16 ± 0.10 & 0.84 ± 0.10 & 1.09 ± 0.05 \\
$\left \{\alpha, \beta \right \}=100$ & 1.08 ± 0.04 & 0.92 ± 0.03 & 1.01 ± 0.02 \\
\bottomrule[1pt]
\end{tabular}
\caption{The AE (mean ± standard deviation) with the different setting of tuning parameters $\alpha$ and $\beta$.}
\label{parameter analysis}
\end{table}

\subsection{Evalution on Synthetic Datasets}
We use the causal DAG in Figure~\ref{CMiVAE SCM} to generate the synthetic datasets with sample sizes, 2K, 4K, 6K, 8K, and 10K for our experiments. 

In the causal DAG $\mathcal{G}$, $\mathbf{C}$ and $\mathbf{M}$ are latent variables, $\mathbf{X_{C}}$ and $\mathbf{X_{M}}$ are the proxy variables. $\mathbf{C}$ is generated from Bernoulli distribution. For a post-treatment variable $M \in \mathbf{M}$, it is generated from the treatment variable $T$ by using $M=\eta_{1} T+ \varepsilon_{M}$, where $\eta_{1}$ is coefficient, $\varepsilon_{M}$ is the noise term. $X_{C}$ and $X_{M}$ are generated from the latent confounder $C$ and the post-treatment variable $M$, defined as $X_{C} \sim N(C,\eta_{2}C)$ and $X_{M} \sim N(M, \eta_{3}M)$, where $\eta_{2}$ and $\eta_{3}$ are two coefficients. We use the Bernoulli distribution with the conditional probability to generate the treatment $T$, defined as $P(T=1|\mathbf{C})=0.75C+0.25(1-C)$. Then, we can define $Y(T=t)=T+3C+M+\varepsilon_{Y}$, where $\varepsilon_{Y}$ is an error term. 

We can obtain the true ITE for an individual based on the data generation process, and the true ATE and CATE are 1. We repeated the experiment 30 times independently for each setting to evaluate the performance of CPTiVAE.

\begin{figure}[t]
    \centering
    \subfigure[ATE]{
    \includegraphics[width=0.22\textwidth]{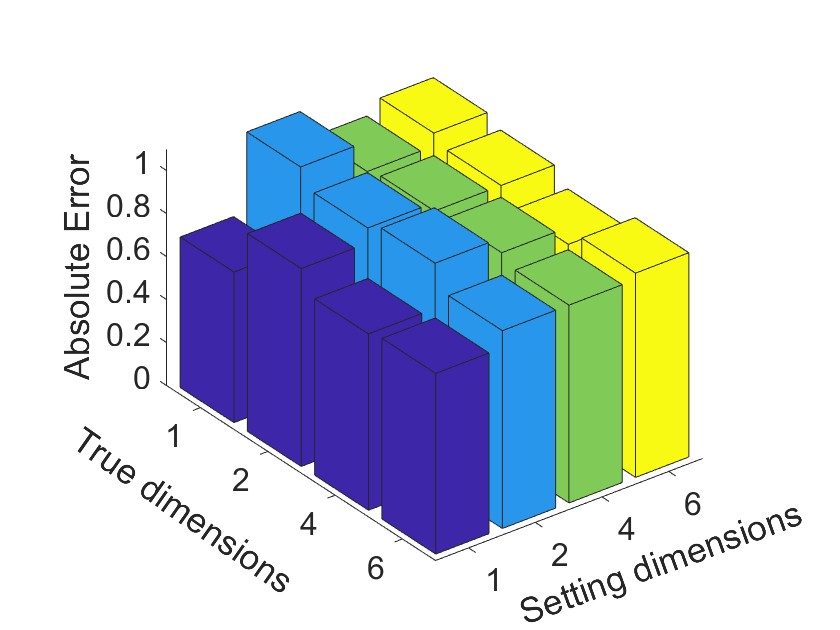}
    \label{Abs Error}
    }
    \subfigure[CATE]{
    \includegraphics[width=0.22\textwidth]{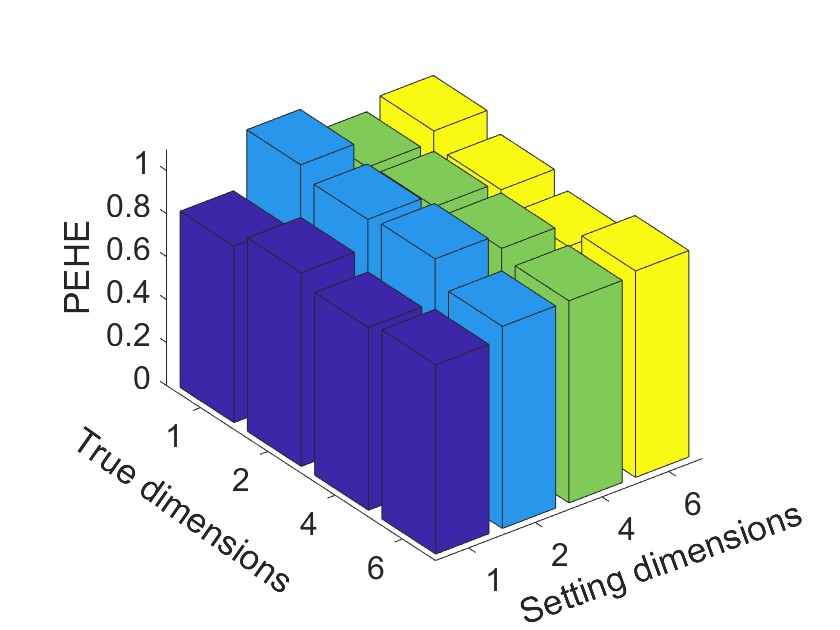}
    \label{PEHE_10k}
    }
    \caption{The results of the dimensionality study. ``True dimensions'' refer to the dimensions of $\mathbf{C}$ and $\mathbf{M}$ in the data, and ``Setting dimensions'' correspond to the parameters of $D_{\mathbf{C}}$ and $D_{\mathbf{M}}$ in the CPTiVAE algorithm.}
    \label{dimension study}
\end{figure}

\begin{figure}[t]
    \centering
    \subfigure[ATE]{
    \includegraphics[width=0.47\textwidth]{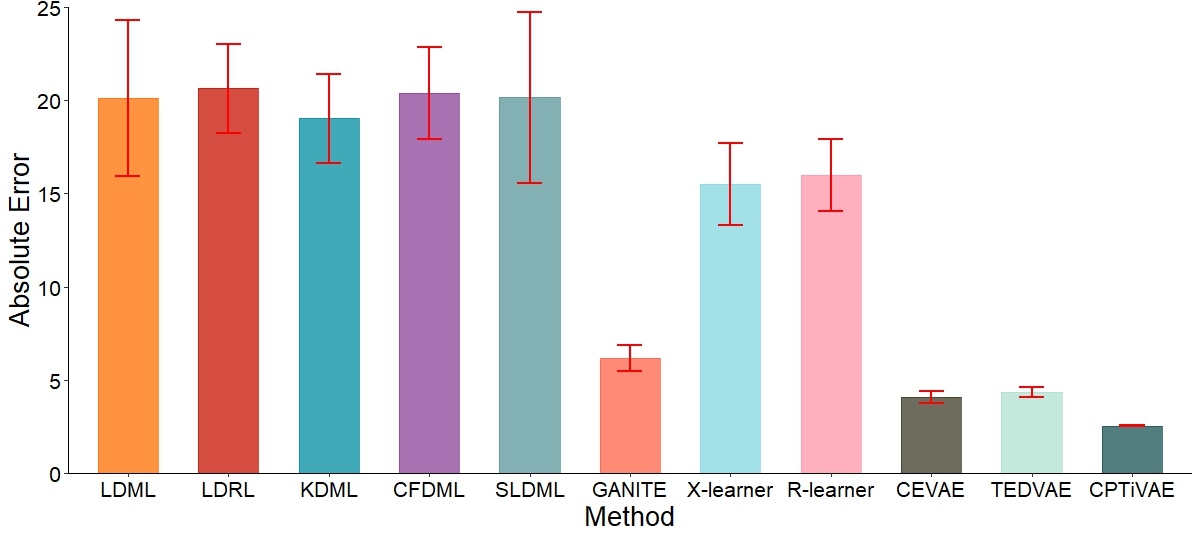}
    \label{IHDP_ATE}
    }
    \subfigure[CATE]{
    \includegraphics[width=0.47\textwidth]{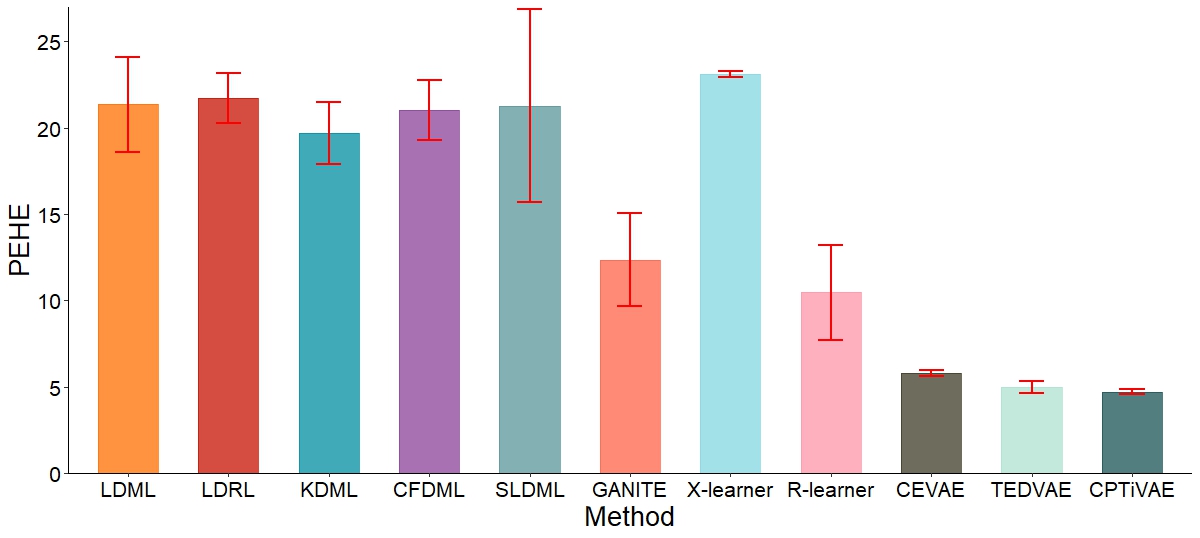}
    \label{IHDP_PEHE}
    }
    \caption{The performance for estimating ATE and CATE on the semi-synthetic dataset.}
    \label{dimension study semi}
\end{figure}

We report the AE and PEHE for synthetic datasets in Tables~\ref{abs error}. From the experimental results, we can know that the estimators based on machine learning, i.e., LDML, LDRL, KDML, CFDML, SLDML, GANITE, X-learner and R-learner have a large AE and PEHE on synthetic datasets since these estimators cannot learn the representation from proxy variables. TEDVAE and CEVAE have a large AE and PEHE on synthetic datasets since both methods cannot tackle the latent post-treatment variables. Our CPTiVAE algorithm obtains the smallest AE and PEHE among all methods on both synthetic datasets since our CPTiVAE algorithm can learn the representation of the latent post-treatment variable from proxy variables.

In summary, the simulation experiments show that the CPTiVAE algorithm proposed in this paper can effectively address the bias introduced by the latent post-treatment variables when estimating ATE and CATE from observational data. Moreover, it further shows that CPTiVAE can recover latent variable representations from proxy variables.

\subsection{Parameter Analysis}
To balance $\mathcal{L}_{\text{ELBO}}$ and the two classifiers during the training process, we add two tuning parameters $\alpha$ and $\beta$ to our CPTiVAE algorithm. We use the synthetic datasets with a sample size of 6K, 8K, and 10K, generated by the same data generation process described in Section 5.2, to analyze the sensitivity of the two parameters. 

In this paper, we set $\left \{\alpha, \beta \right \}= \left \{0.01, 0.1, 1, 10, 100\right \}$ and the results of the experiments show in Table~\ref{parameter analysis}. From Table~\ref{parameter analysis}, the two parameters $\alpha$ and $\beta$ are less sensitive to the AE of the CPTiVAE algorithm in ATE estimation. In summary, we set the two parameters $\alpha$ and $\beta$ to small values for our CPTiVAE algorithm.

\subsection{Dimensionality Study}
There are two representations $\mathbf{C}$ and $\mathbf{M}$ in our study and we set their dimensions to 1 for both. To demonstrate the effectiveness of this setting, we conduct experiments on datasets with samples fixed at 10k and repeat the experiments independently 30 times to minimize random noise for each setting. We generate a set of simulation datasets based on the data generation process described in Section 5.2, setting the dimensions of the two latent variables $(\mathbf{C}, \mathbf{M})$ to $\left \{1, 2, 4, 6 \right \}$, respectively. In our CPTiVAE algorithm, we conduct experiments with the two parameters $(D_{\mathbf{C}}, D_{\mathbf{M}})$ set to $\left \{1, 2, 4, 6 \right \}$ respectively. 

Figure~\ref{dimension study} shows the AE and PEHE of the CPTiVAE algorithm on these datasets. When $(D_{\mathbf{C}}, D_{\mathbf{M}})$ is set to $(1,1)$ regardless of the true dimensions of $\mathbf{C}$ and $\mathbf{M}$, the CPTiVAE algorithm obtains the smallest AE and PEHE on the synthetic datasets. Furthermore, our experiments demonstrate that the CPTiVAE algorithm is effective in addressing high-dimensional latent variables. Hence, it is reasonable to set $D_{\mathbf{C}}$ and $D_{\mathbf{M}}$ to 1.

\subsection{Experiments on Semi-Synthetic Datasets}
In this section, we compare our CPTiVAE algorithm with the baseline methods on the semi-synthetic dataset based on the IHDP dataset. The Infant Health and Development Program (IHDP) is a randomized controlled study designed to evaluate the effect of physician home visits on infants' cognitive scores. The IHDP dataset is often used as a baseline dataset to evaluate causal effect methods~\cite{hill2011bayesian}. The dataset contains 747 samples and 25 variables, and in our experiments, following the synthetic dataset generation process described in Section 5.2, two post-treatment variables were generated based on these 25 variables. 

The performance of the CPTiVAE algorithm on semi-synthetic datasets is shown in Figure~\ref{dimension study semi}. We observe that the CPTiVAE algorithm achieves the smallest AE and PEHE. This further illustrates that the CPTiVAE algorithm can effectively tackle both latent confounders and post-treatment variables.

\begin{figure}[t]
    \centering
    \includegraphics[width=0.5\textwidth]{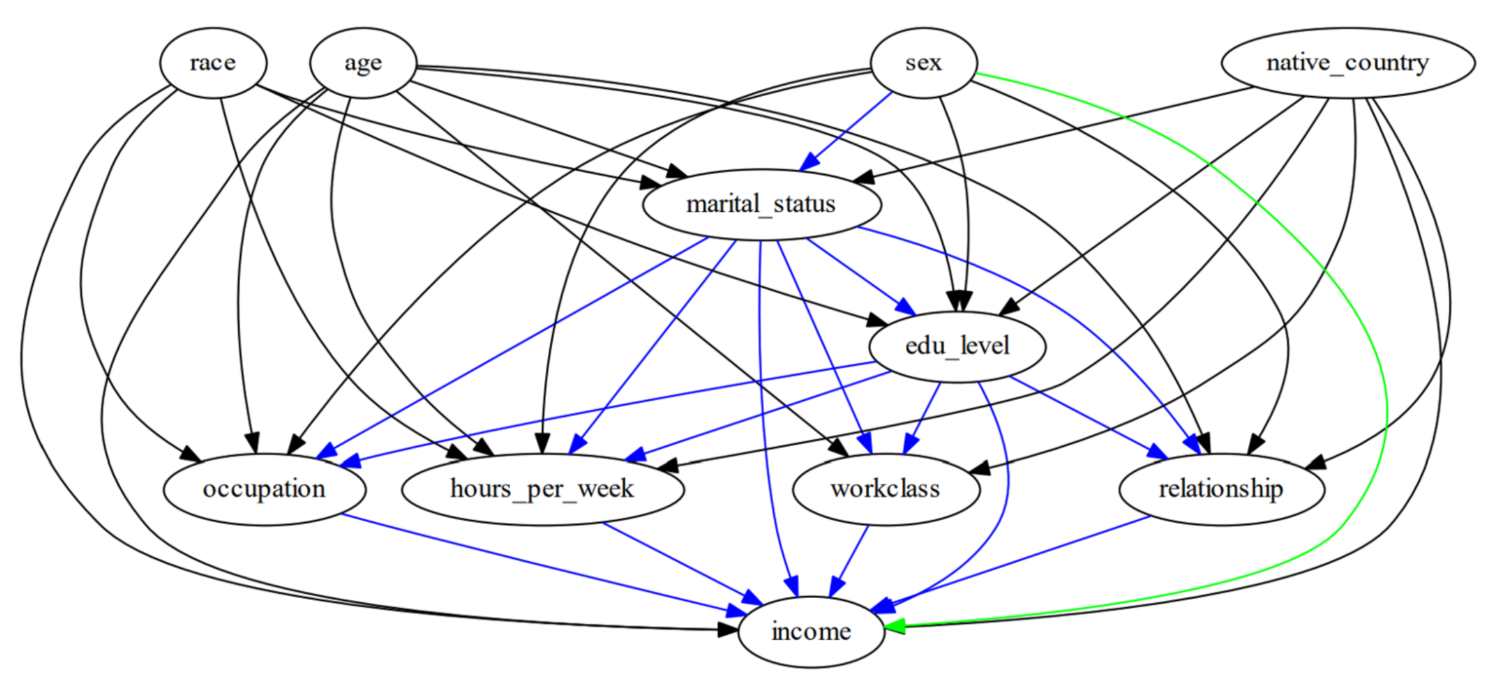}
    \caption{The causal network for the Adult dataset: the green path represents the direct path, and the blue paths represent the indirect paths passing through marital status \protect \cite{ijcai2017p549}.}
    \label{Adult_all}
\end{figure}

\subsection{Case Study on a Real-World Dataset}
In this section, we apply the CPTiVAE algorithm to study the causal effect of education level on income in the Adult dataset~\cite{causal2017Li}. The Adult dataset contains 14 attributes for 48,842 individuals, which is retrieved from the UCI repositiry~\cite{dua2017uci}. In our problem setting, some extraneous variables, such as sex, relationship, and race, are not relevant to our study, and we remove these variables. For convenience, we divide the years of education into 2 groups, those greater than 10 years as the treatment group and the rest as the control group. We define education-num as $T$, income as $Y$, age and marital status as $\mathbf{X_{C}}$, and the remaining attributes as $\mathbf{X_{M}}$, including workclass, occupation, and hours-per-week.

\begin{figure}
    \centering
    \includegraphics[width=0.4\textwidth]{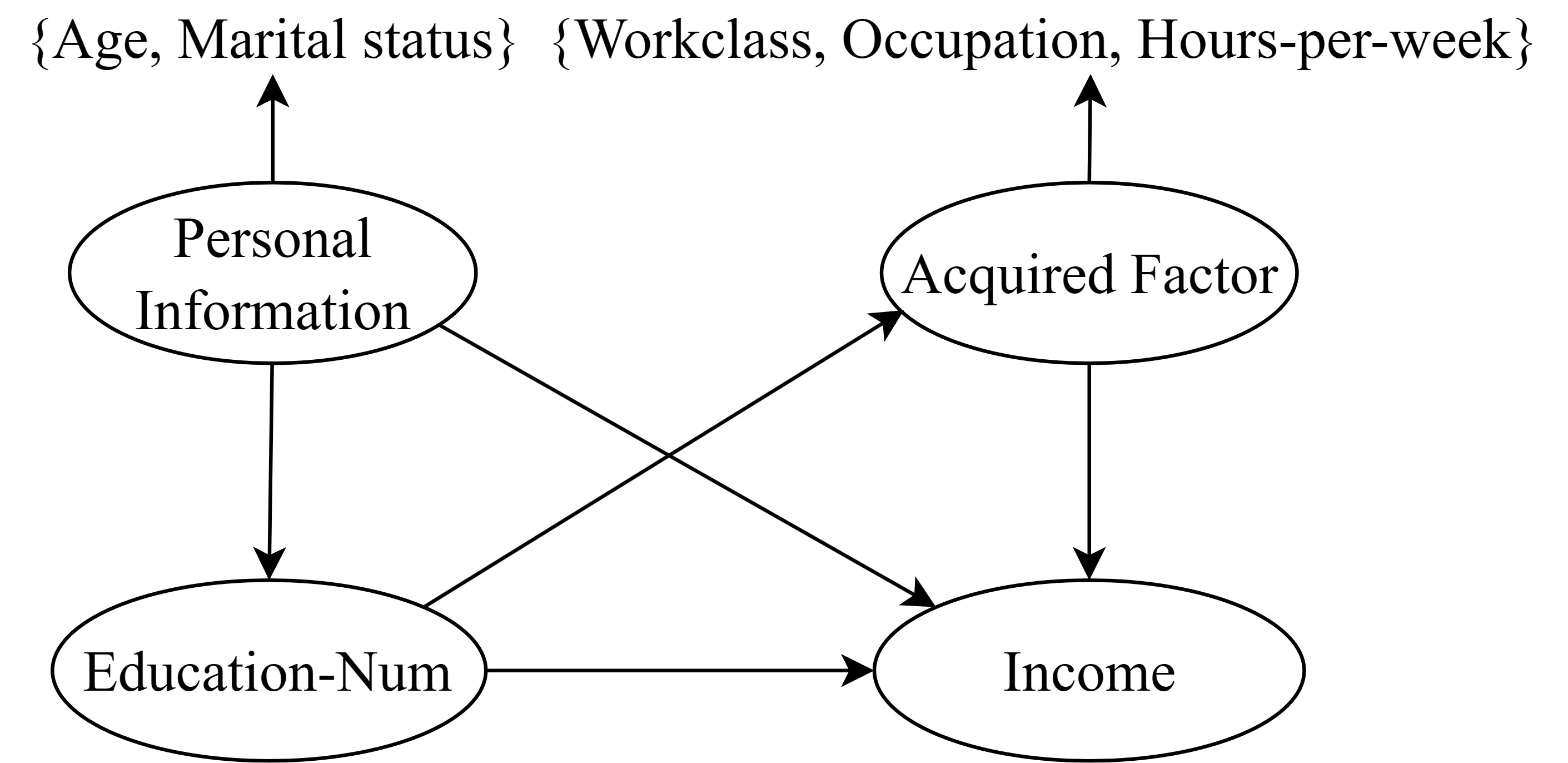}
    \caption{Simplified DAG for Adult dataset.}
    \label{Adult DAG}
\end{figure}


\textcolor{black}{The causal network for the Adult dataset is shown in~\ref{Adult_all}.} Based on the complete causal DAG, we can obtain the simplified DAG in Figure~\ref{Adult DAG}, and aim to estimate the causal effect of education level on income. In this example, we consider the proxy of acquired factors to be accessible, and they are workclass, occupation, and hours-per-week. 

We obtain $ATE=9.245$ by using the CPTiVAE algorithm, which is consistent with the conclusion that education has a positive effect on income~\cite{Hu_Wu_Zhang_Wu_2021}. In contrast, we obtain $ATE=0.053$ if all covariates are defined as $\mathbf{X_{C}}$, and $ATE=0.180$ if only considering the latent confounders. Both indicate that education level has no or marginal effects on income which is not the correct conclusion. This shows that if we do not consider $\mathbf{X_{M}}$, the causal effect estimation will be biased.

\section{Conclusion}
In this paper, we address the challenging problem of estimating causal effects from observational data with latent confounders and post-treatment variables. We propose the use of the VAE and the identifiable VAE techniques for learning the representations of latent confounders and post-treatment variables, respectively. Furthermore, we provide theoretical proof for the identifiability in terms of the learned representation $\mathbf{M}$. Extensive experiments on synthetic and semi-synthetic datasets demonstrate that CPTiVAE outperforms existing methods for ATE and CATE estimation. We also examine that CPTiVAE's performance remains robust against variations in the model parameters $\alpha$ and $\beta$. Additionally, a case study on the real-world dataset further illustrates CPTiVAE's potential in practical scenarios.

\newpage
\bibliographystyle{ieeetr}
\bibliography{CPTiVAE.bib}
\end{document}